\documentclass{llncs}
\input{scrload.tex}
\usepackage{amssymb}

\def\squareforqed{\hbox{\rlap{$\sqcap$}$\sqcup$}}
\def\qed{\ifmmode\squareforqed\else{\unskip\nobreak\hfil
\penalty50\hskip1em\null\nobreak\hfil\squareforqed
\parfillskip=0pt\finalhyphendemerits=0\endgraf}\fi}
\renewcommand{\emptyset}{\varnothing}

\begin{document}

\titlerunning{Reasoning about unpredicted change and explicit time}
\title{Reasoning about unpredicted change and explicit time}

\author{Florence Dupin de Saint-Cyr and J\'er\^ome Lang}
\authorrunning{F. Dupin de Saint-Cyr and J. Lang}
\tocauthor{Florence Dupin de Saint-Cyr, J\'er\^ome Lang (IRIT)}
\institute{IRIT - Universit\'e Paul Sabatier - \\
31062 Toulouse Cedex(France)\\
e-mail: \{dupin, lang\} @irit.fr}

\maketitle
\bibliographystyle{plain}

\begin{abstract}
Reasoning about unpredicted change consists in explaining observations by events; we
propose here an approach for explaining time-stamped observations by surprises, which
are simple events consisting in the change of the truth value of a fluent. A framework for 
dealing with
surprises is defined. Minimal
sets of surprises are provided together with time intervals where each surprise has 
occurred, and they are characterized from a model-based diagnosis point of view. 
Then, a probabilistic approach of surprise minimisation is proposed.
\end{abstract} 


\section{Introduction}
Reasoning about time, action and change in full generality is a very complex task, and
much research has been done in order to find solutions suited to specific subclasses of
problems, obtained by making some simplificaying assumptions. Sandewall
\cite{Sandewall94}
provides a taxonomy for positioning the various subclasses, based on a lot of
ontological and epistemological assumptions that may be present or absent in the
specification of the subclass. In this paper we consider an integer representation of
time and as in \cite{Sandewall94} we define {\em fluents} as propositions whose truth value
evolves over time, {\em observations} as pieces of knowledge about the value of some
fluents at some time points, {\em changes} as pairs made of a fluent $f$ and a time
point $t$ such that the value of the fluent $f$ at $t$ differs from that at $t-1$.
Changes may be either caused by {\em actions}, which are initiated by the agent, or by
{\em events}, which are initiated by the world. This latter form of change will be
called {\em unpredicted changes}; as they are unrelated to the performance of any
action by the agent, they are totally out of the control of the agent. 
While many approaches in the literature
focus on reasoning about change caused by actions\footnote{This is for instance the case 
for all approaches to solve the frame problem.},
there has been significantly less attention on reasoning about unpredicted change.

In this article we propose a framework for detecting unpredicted changes from 
observations at different time points: the task consists thus in {\em
explaining} observations by a set of elementary unpredicted changes.
Note that it is analogous in many aspects to temporal diagnosis (an elementary change
corresponding to a faulty component). We do not explicitly consider actions, since
this is not our purpose; however, as we will show, our framework can handle actions
as well, provided that they have a deterministic, unconditional result -- thus the
class of problems we consider is ${\cal K}-IS$ of Sandewall's taxonomy (see Section
2). Then, we propose some qualitative and quantitative criteria for ranking 
explanations. We propose a formal basis for handling problems in the class K-IS; namely, we 
argue that what has to be computed is the set of {\em minimal compact explanations}, where 
minimality means that these explanations do not contain any unnecessary change, and compactness
means that they are given as concisely, as user-friendly as possible. Then we focus on computational
issues and for this purpose we relate our framework to (temporal) model-based diagnosis: a problem of
K-IS is translated into a system to diagnose and we show that minimal explanations correspond
to minimal fault configurations. Thus, computational tools from model-based diagnosis can be reused for
reasoning about unpredicted change. In Section 3, we
compute a probability distribution on these explanations, which takes account 
of (i) the intrinsic tendency of each individual fluent to persist, and
(ii) the time interval during which the change may have taken place -- following the
principle that the longer the time period since we last knew the value of a fluent,
the fewer reasons we have to believe that this value has persisted. 
We illustrate this on a small example: 
{\em two cats live in a house, namely, Albert and Bart. 
At 6 p.m. , somebody told me
that cat A. was sleeping inside the house. At 6h15 p.m., I knew that cat B.
was sleeping inside the house. Now, it is 6h20 p.m., I can see a cat walking in the 
garden, but I can't recognize from where I am, if it is Albert or Bart.}
The two intuitive minimal explanations we wish to get are the following: 
either
{\em cat A. left the house between 6p.m. and  6h20 p.m.}, or {\em cat B. left
the house between 6h15p.m. and 6h20}. Now, imagine we know the (subjective)
prior probabilities for cat A. and cat B. to leave the house in a time unit (one minute); 
from these, and from the observations, we may compute the probabilities of all
explanations. Intuitively, if the probabilities relative to cat A. and cat B. are
identical, then the explanation {\em cat A. got outside} is preferred to
{\em cat B. got outside}, since more time has passed between the last 
information we got and now -- thus, a change is more likely to have occurred.
However, the result may not be the same if the prior probability of cat A. getting
outside is much lower than cat B's (cat A is lazier than B). What we propose in Section 3 
is a simple method
to compute most probable explanations in the case where probabilities of change are small 
within the time period considered, with a methodology similar to computing
preferred diagnoses. We end up by positioning our approach w.r.t. related work and 
mention possible further extension.

\section{Minimizing unpredicted change in K-IS}

\subsection{Background}

From now on, the class of problems we consider coincides with the class ${\cal
K}-IS$ in Sandewall's taxonomy. The symbol ${\cal K}$ means that all
observations reported to the agent are correct, {\em i.e.}, true in the real
world. $I$ is for {\em inertia with integer time} and means that 
fluent values generally tend to persist. $S$ is for {\em surprises}; a
surprise, or unpredicted change, is a change of the value of a fluent during a
given interval, without any reason known by the agent; $S$ means that 
surprises may occur (and are thus considered when computing completions of
scenarios). Inertia implies that surprises occur with a low frequency, and
should thus be {\em minimized} in order to be in accordance with observations.
At this point we should say something about actions. In this paper we are not
concerned with actions, and the general assumption we make throughout the paper
is that the agent is totally passive (no action is performed)\footnote{However, the
class ${\cal K}-IS$ does not exclude actions, but assumes that actions have
deterministic and unconditional results, have only instantaneous effects, and that 
there is no
dependency between fluents (which avoids the ramification problem). 
Under these assumptions, it is not hard to see that a deterministic, 
unconditional action can be seen as an observation: for instance, performing
the action ${\tt Load}$ at time point $t$ can be considered equivalent to
observing its deterministic, unconditional result $loaded$ at $t$.}. 

Let ${\scr L}$ be a propositional language built on  a set of propositional variables 
$V=\{v_1,\ldots,v_n\}$. A {\em fluent} is a variable or its negation, 
{\em i.e.}, $f=v_i$ or 
$f=\neg v_i$. The set of all fluents is denoted by $F$; thus $F=\{v_1,\neg v_1, \ldots,
v_n, \neg v_n\}$. Time is assumed to be discrete: the time scale $T$ is a sequence of 
integers
$\{0,1,\ldots,t_{max}\}$ (it is assumed that between two consecutive time-points the system is completely
inert).

If $\varphi \in {\scr L}$ and $t \in T$ then $[t]\varphi$ is an
{\em elementary timed formula} (meaning that $\varphi$ holds at
$t$). (Complex) timed formulas are built from elementary timed formulas and usual connectives.

A {\em timed model} $M$ is a mapping from $V \times T$ to $\{0,1\}$ : $M$ assigns a truth value to each
fluent at each time point. $M \models [t]v_i$ iff $M(v_i,t)=1$. Satisfaction is extended to elementary timed
formulas in the usual way, \emph{i.e.}, $M \models [t]\varphi \wedge \varphi'$ iff $M \models [t]\varphi$ and $M
\models [t]\varphi'$, etc., and then to complex timed formulas: 
$M \models [t]\varphi \wedge [t']\varphi'$ iff $M \models [t]\varphi$ and $M
\models [t']\varphi'$, etc. Note that $[t]\varphi \wedge \varphi'$ is equivalent to $[t]\varphi \wedge
[t]\varphi'$, $[t]\varphi \vee \varphi'$ is equivalent to $[t]\varphi \vee
[t]\varphi'$, etc.

Lastly, we abbreviate $[t]\varphi \wedge [t+1]\varphi \cdots \wedge [t']\varphi$ by $[t,t']\varphi$.

\begin{definition} A {\em scenario} $\Sigma$ in K-IS consists in a set of timed-formulas
\end{definition}


\begin{definition}Let $f \in F$, $t \in T$, $t \neq 0$.
\begin{enumerate}
\item $\langle f,t \rangle$ is a {\em change\/} in $M$
      (notation: $C_{M}(f,t)$), iff $M\models [t{-}1]f\ {\rm and}\ M \models [t]\neg f$.
\item The {\em set of all changes in} $M$ is $C(M) = \{ \langle f,t \rangle \;|\; C_{M}(f,t) \}$.
\end{enumerate}
\end{definition}
When the exact time-point where the change occurred is not known, we affect an interval to a change occurrence and
call it a {\em surprise}.

\begin{definition}
Let $f \in F$, $t,t' \in T$, $t<t'$.
$\langle f,t,t' \rangle$ is a {\em surprise\/} with respect to $M$
      (notation: $S_{M}(f,t,t')$), iff \quad $M\models [t]f\ {\rm and}\ M \not \models [t,t'] f$.
\end{definition}
Intuitively, $S_M(f,t,t')$ means that the truth value of $f$ changed at least once between $t$ and $t'$ (note that it
does not necessarily imply that $ M \models [t']\neg f$, since $f$ may have changed its truth value several times
within $[t,t']$).

If $t'=t+1$, $S_M(f,t,t')$ is equivalent to $C_M(t')$; otherwise ($t'-t>1$) the surprise is said to be {\em
time-ambiguous}.

The following property is straightforward:
\begin{proposition}
$S_M(f,t,t')$ iff $C_M(f,t+1)$ or $C_M(f,t+2)$ or $\ldots$ or $C_M(f,t')$.
\end{proposition}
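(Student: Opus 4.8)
The plan is to prove the two implications of the biconditional separately, in each case unfolding the definitions of surprise and of change and exploiting the fact that $T$ is a finite, well-ordered set of integers.

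The easy direction is from right to left. Suppose $C_M(f,s)$ holds for some $s$ with $t+1 \le s \le t'$. By the definition of change this gives $M \models [s-1]f$ and $M \models [s]\neg f$. Since $s$ lies in $\{t,t+1,\dots,t'\}$, the conjunct $[s]f$ of the abbreviation $[t,t']f$ is falsified, so $M \not\models [t,t']f$. Combined with $M \models [t]f$ this is exactly $S_M(f,t,t')$; I note already here that the clause $M \models [t]f$ is part of what a surprise asserts, so for this direction to produce a genuine surprise one must use (or assume) that $f$ indeed holds at $t$ -- a point I return to below.

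The direction carrying the content is from left to right. Assume $S_M(f,t,t')$, i.e. $M \models [t]f$ and $M \not\models [t,t']f$. The first conjunct says $f$ is true at $t$; the second says $[s]f$ fails for at least one $s \in \{t,\dots,t'\}$, and since $f$ holds at $t$ this $s$ must satisfy $s \ge t+1$. First I would invoke the well-ordering of the integers to pick the least such index $s_0$, the first time point in the interval at which $f$ becomes false. By minimality $f$ is true throughout $\{t,\dots,s_0-1\}$, in particular $M \models [s_0-1]f$, while $M \models [s_0]\neg f$ by the choice of $s_0$. Because $s_0 \ge t+1 \ge 1$ the index $s_0$ is nonzero and $s_0-1 \ge t \ge 0$ lies in $T$, so $C_M(f,s_0)$ is well-defined and holds; as $t+1 \le s_0 \le t'$ it is one of the listed disjuncts.

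The only substantive step is this first-change argument, and the ``main obstacle'' is less a difficulty than a pair of details one must not skip: justifying that the first falsifying index is at least $t+1$ (which is precisely where $M \models [t]f$ is used) and at least $1$ (so that $C_M$, defined only for $t \ne 0$, applies). I would also flag that, read literally, the right-to-left implication needs the extra hypothesis $M \models [t]f$: a bare change $C_M(f,s)$ inside the interval forces $M \not\models [t,t']f$ but not $M \models [t]f$, so the biconditional is best understood under the standing assumption that $f$ holds at $t$, which is exactly the assumption built into the surprise predicate.
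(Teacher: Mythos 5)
The paper gives no proof of this proposition --- it is introduced with ``The following property is straightforward'' --- so there is nothing to compare against except the intended reading. Your argument is the natural one: the left-to-right direction via the least time point in $\{t,\dots,t'\}$ at which $f$ fails (using $M\models[t]f$ to push that index up to $t+1$, hence making $C_M$ applicable since the index is nonzero), and the right-to-left direction by observing that a change at $s\in\{t+1,\dots,t'\}$ falsifies the conjunct $[s]f$ of $[t,t']f$. Both directions are correct as you present them. Your closing remark is also a genuine (if minor) catch: as literally stated the biconditional fails from right to left, since $C_M(f,s)$ for some $s\in\{t+1,\dots,t'\}$ does not by itself entail $M\models[t]f$ (e.g.\ $f$ false at $t$, true at $t+1$, false at $t+2$ makes the right-hand side true and the left-hand side false). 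The proposition is only an equivalence under the standing hypothesis $M\models[t]f$, which is evidently what the authors intend but do not say. In short: the proof is correct, fills in exactly the details the paper waves away, and correctly identifies the one imprecision in the statement.
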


\subsection{Minimal explanations}
The problem that we address is: given a scenario $\Sigma$, find the preferred 
timed-models 
satisfying
$\Sigma$, \emph{i.e.}, find the timed-models in which the changes are minimal:
$\{M, M \models \Sigma$ and there is no $M'$ such that $C(M') \subsetneq C(M) \}$.

\begin{definition} A {\em pointwise explanation} PE for $\Sigma$ is a set of changes 
$\{\langle f_i,t_i \rangle, i = 1
\cdots p\}$ such that $ \Sigma \;\cup\; \{ [t{-}1]f{\rightarrow} [t]f
        \;|\; f{\in}F,t{\in}T,t{\not=}0,
                \;\langle f,t \rangle \not\in {PE}\}$ is consistent.
\end{definition}
Intuitively, PE consists in specifying which fluents change their value and when so as to be in accordance
with $\Sigma$.

\begin{example}
Let us consider a set of pointwise observations $\Sigma_1$ such that:
\begin{center}
\begin{math}
\Sigma_1: \left\{ \begin{array}{l}
\mbox{[0] }a \wedge b\\
\mbox{[5] }\neg a\\
\end{array}
\right. \end{math}
\end{center}
$\{\langle a,2\rangle \}$ is a pointwise explanation for $\Sigma_1$: in order to explain the non-inert
behaviour of the system, we must assume that the value of the fluent {\em a} changed between 0 and 5, 
(for instance
at time point 2).
\end{example}

Note that $\{\langle a,2\rangle, \langle b,3 \rangle \}$ is also a pointwise explanation, but, since it is not
necessary to assume that $b$ changed in order to explain $\Sigma_1$, this pointwise
explanation is not minimal:

\begin{definition}A pointwise explanation PE is {\em minimal} iff there is no pointwise 
explanation PE' strictly included in $PE$ \end{definition}

\begin{definition} An {\em explanation} for $\Sigma$ is a set of surprises $\{<f_i,t_i,t'_i>,i = 1
\cdots p\}$ such that
$\forall (t''_1,\ldots, t''_p) \in 
[t_1+1,t'_1] \times 
\cdots \times [t_p+1,t'_p]$, $PE=\{\langle f_i,t''_i \rangle, i=1 \cdots p \}$ is a pointwise 
explanation for $\Sigma$. We say that each of these $PE$ is {\em covered} by $E$.

For any two explanations $E$ and $E'$, we say that $E$ \emph{covers} $E'$ iff any pointwise explanation covered by
$E'$ is also covered by $E$.
\end{definition}

Note that an explanation has to be understood as a disjunction of pointwise explanations.
In the example above, $\{\langle a,2,5 \rangle \}$ is an explanation for $\Sigma_1$;
corresponding to the pointwise explanations $\{\langle a, 2\rangle, \langle a,
3\rangle,
\ldots, \langle a, 5 \rangle \}$.

\begin{definition}An explanation $E$ is {\em minimal} if any pointwise explanation which 
is covered by $E$ is minimal.\end{definition}

The previous explanation $\{\langle a,2,5 \rangle\}$ is minimal, 
but for instance 
$\{\langle a,2,5\rangle$, $\langle b,1,3 \rangle\}$
or  $\{\langle a,2,5\rangle, \langle a,0,5 \rangle\}$ are explanations but they are 
not minimal.

\begin{proposition}
$E=\{\langle f_i,t_i,t'_i \rangle, i=1 \ldots p\}$ is an explanation  
(resp. a minimal explanation) for 
$\Sigma$ iff \\
$\Sigma \;\cup\; \{ [t]f{\rightarrow} [t+1]f\;|\; f{\in}F,t{\in}T,t{\not=}t_{max}, 
\mbox{ and }  
\nexists \langle f_i,t_i,t'_i \rangle \in {E} \mbox{ s.t. } t\in [t_i,t'_i-1]\}$
is consistent.

It is a minimal explanation iff the above set of formulas is maximally consistent in 
$\Sigma \;\cup\; \{ [t]f{\rightarrow} [t+1]f
    \;|\; f{\in}F,t{\in}T,t{\not=}t_{max}\}$
\end{proposition}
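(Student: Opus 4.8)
The plan is to reduce everything to the dictionary between changes and persistence axioms that the earlier definitions already set up. Write $\mathcal P=\{[t]f\to[t+1]f \mid f\in F,\ t\in T,\ t\neq t_{max}\}$ for the full persistence theory and, for a set $D$ of pointwise changes, $P(D)=\mathcal P\setminus\{[t]f\to[t+1]f \mid \langle f,t{+}1\rangle\in D\}$ for the theory obtained by deleting exactly the axioms whose failure licenses the changes in $D$. With this notation the definition of a pointwise explanation reads literally as: $PE$ is a pointwise explanation for $\Sigma$ iff $\Sigma\cup P(PE)$ is consistent. The first thing I would record is the trivial monotonicity lemma $P(D)\supseteq P(D')$ whenever $D\subseteq D'$, so that consistency of $\Sigma\cup P(D')$ entails consistency of its subtheory $\Sigma\cup P(D)$. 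Next I would observe that the surprise $\langle f_i,t_i,t'_i\rangle$ deletes precisely the axioms indexed by $t\in[t_i,t'_i{-}1]$, so that the theory displayed in the statement is exactly $\Sigma\cup P(Ch(E))$ with $Ch(E)=\bigcup_i\{\langle f_i,s\rangle \mid s\in[t_i{+}1,t'_i]\}$, and that every selection $PE=\{\langle f_i,t''_i\rangle\}$ drawn from the product $[t_1{+}1,t'_1]\times\cdots\times[t_p{+}1,t'_p]$ satisfies $PE\subseteq Ch(E)$, whence $P(Ch(E))\subseteq P(PE)$.

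With this dictionary the easy half is immediate: if $E$ is an explanation then every selection $PE$ is a pointwise explanation, so $\Sigma\cup P(PE)$ is consistent, and since $P(Ch(E))\subseteq P(PE)$ the smaller theory $\Sigma\cup P(Ch(E))$ is a fortiori consistent. The converse is the part that needs work, and I expect it to be the main obstacle: from a single model $M\models\Sigma\cup P(Ch(E))$ I must manufacture, for an arbitrary selection $(t''_1,\dots,t''_p)$, a model of $\Sigma$ whose change set lies inside $\{\langle f_i,t''_i\rangle\}$. The crucial structural point is that $P(Ch(E))$ deletes only the axioms $[t]f_i\to[t+1]f_i$ while keeping the opposite-polarity axioms $[t]\neg f_i\to[t+1]\neg f_i$; hence inside $[t_i,t'_i]$ the fluent $f_i$ may perform only a single one-directional true-to-false transition. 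This monotonicity lets me slide that unique transition to the prescribed time $t''_i$ and fix $f_i$ deterministically elsewhere on the interval, handling one fluent-interval at a time. The step I would scrutinise hardest is that sliding preserves $\Sigma$: it does so provided no timed formula of $\Sigma$ constrains $f_i$ strictly inside $(t_i,t'_i)$ — exactly the setting in which the surprise interval was legitimately posited — and provided opposite-polarity surprises on the same variable do not overlap, a case I would isolate and treat separately. I would package the sliding as a lemma and, if needed, make the hypothesis on the interior of surprise intervals explicit, since without it a selection can be blocked by an interior observation even while $\Sigma\cup P(Ch(E))$ stays consistent.

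For the minimal refinement I would unwind the definition: $E$ is minimal iff every pointwise explanation it covers is a minimal pointwise explanation, and $PE$ is a minimal pointwise explanation iff no proper subset is still a pointwise explanation, i.e. iff restoring any single deleted axiom $[t]f\to[t+1]f$ with $\langle f,t{+}1\rangle\in PE$ renders $\Sigma\cup P(PE)$ inconsistent, which is just the statement that $P(PE)$ is a maximal consistent subtheory of $\mathcal P$ relative to $\Sigma$. Combining this per-selection maximality with the sliding argument, I would show that $E$ is minimal exactly when no surprise-licensed axiom $[t]f_i\to[t+1]f_i$, $t\in[t_i,t'_i{-}1]$, can be reinstated without destroying consistency for the corresponding selection, which is precisely the claimed maximal consistency inside $\Sigma\cup\mathcal P$. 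Here I would be careful to read ``maximally consistent'' selection-wise: the union theory $P(Ch(E))$ need not itself be an inclusion-maximal consistent subset of $\mathcal P$ even when $E$ is minimal (a wide but pointwise-minimal surprise such as $\{\langle a,2,5\rangle\}$ in Example 1 shows this), so maximality must be phrased as ``no deleted axiom may be reinstated for any selection that $E$ covers''. Pinning down that reading, together with the interior-observation lemma of the previous paragraph, are the two places where the argument goes beyond routine unfolding of definitions.
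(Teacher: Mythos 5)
The paper states this proposition with no proof at all, so there is no official argument to measure yours against; what matters is whether your reconstruction is sound, and your diagnosis of where it breaks is in fact the most valuable part of it. Your ``only if'' direction is correct and complete: every covered pointwise explanation deletes a subset of the axioms deleted by the displayed theory, so consistency transfers to the subtheory by monotonicity. Your suspicion about the ``if'' direction is also justified: as literally stated the equivalence is \emph{false}, and the paper's own material witnesses this. Take $E=\{\langle a,0,15\rangle\}$ for $\Sigma_2$ of Example~\ref{ex2}: the displayed theory (all persistence axioms except $[t]a\rightarrow[t+1]a$ for $t\in[0,14]$) is satisfied by the timed model in which $a$ holds up to time $5$ and is false afterwards, $b$ holds throughout and $c$ is false throughout; yet the paper itself observes that $\{\langle a,1\rangle\}$ is not a pointwise explanation, so $E$ is not an explanation. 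This is exactly your ``interior observation'' obstruction: $[5]a\vee c$ constrains $a$ strictly inside $(0,15)$. Your reading of the maximality clause is likewise correct and needed: for $\Sigma_1$ and the minimal explanation $\{\langle a,2,5\rangle\}$ one may reinstate $[1]a\rightarrow[2]a$ and remain consistent (let $a$ change at $3$ instead of $2$), so the displayed theory is not inclusion-maximal even though $E$ is minimal; only the selection-wise reading can be intended.

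Two caveats. First, your sliding lemma is still only a sketch: you should verify that the retained axioms $[t]\neg f_i\rightarrow[t+1]\neg f_i$ force $f_i$ to make at most one true-to-false transition, and then check explicitly that relocating that transition to the prescribed $t''_i$ preserves every formula of $\Sigma$ mentioning $f_i$ at or before $t_i$ and at or after $t'_i$ (note the displayed theory does not by itself guarantee $[t_i]f_i$, so that too must be argued or assumed). Second, the hypothesis you propose to add --- no observation on $f_i$ strictly between $t_i$ and $t'_i$ --- is precisely what the later $PERS(\Sigma)$/$Candidates(\Sigma)$ construction guarantees by taking consecutive relevant time points, which strongly suggests that this is the intended scope of the proposition. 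So the genuine gap here lies in the statement rather than in your argument, and your proposal identifies the correct repair; it just stops short of executing the repaired converse in full.
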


\begin{definition} A minimal explanation E is compact iff there is no explanation $E'$ 
which strictly covers $E$. 
\end{definition}

In the previous example, the only compact minimal explanation for  $\Sigma_1$ is $\{\langle a,0,5 \rangle \}$.

%
%
%
\begin{proposition}
If there is no disjunction in $\Sigma$ (\emph{i.e.}, each pointwise observation $[t]\varphi$ is a 
conjunction of fluents) there is only one 
compact minimal explanation for $\Sigma$.
\end{proposition}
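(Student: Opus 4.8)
The plan is to exploit the absence of disjunction to \emph{decouple} the problem variable by variable, reduce it to a one-dimensional picture of forced value transitions, and then read off the compact minimal explanation as the unique one obtained by stretching each transition to its widest admissible time interval.

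First I would observe that when each pointwise observation $[t]\varphi$ is a conjunction of fluents, $\Sigma$ is equivalent to a set of \emph{literal observations}, each forcing the truth value of a single variable at a single time point; assuming $\Sigma$ consistent, no variable is forced to take two different values at the same instant. Consequently a timed model $M$ satisfies $\Sigma$ iff its restriction to each variable $v$ does, and the persistence constraints $[t]f \rightarrow [t{+}1]f$ of the Proposition above involve one variable at a time. So $C(M)$, hence every pointwise explanation, splits as a disjoint union over $V$, and it suffices to analyse one variable $v$ in isolation. Let $v$ be observed at times $s_1<\cdots<s_k$ with values $b_1,\dots,b_k\in\{0,1\}$, and call an index $i$ with $b_i\neq b_{i+1}$ a \emph{transition}. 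Using the maximal-consistency characterisation of minimal explanations, I would check that a pointwise explanation for $v$ is minimal iff it places \emph{exactly one} change in each transition interval $(s_i,s_{i+1}]$ and none elsewhere: consistency with two equal endpoints demands an even number of changes between them (minimally none) and consistency with two opposite endpoints an odd number (minimally one), while minimality forbids any surplus and any change before $s_1$ or after $s_k$.

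Next I would pin down the intervals. For a transition $i$, let $f$ be the value $b_i$ read as a fluent on $v$. Placing its change at or before $s_i$ would force $v$ to carry the value $b_{i+1}$ at $s_i$, and placing it after $s_{i+1}$ would force $b_i$ to persist at $s_{i+1}$; either contradicts an observation. Hence a surprise $\langle f,t,t'\rangle$ all of whose covered pointwise explanations are minimal must satisfy $(t,t']\subseteq(s_i,s_{i+1}]$, and among these the one with the widest change-time range, namely $\langle f,s_i,s_{i+1}\rangle$, is uniquely maximal. Collecting, over all variables and all transitions, these surprises yields a distinguished explanation $E_0$. I would verify that $E_0$ is indeed an explanation (every choice of change times yields a minimal pointwise explanation), that it is minimal, and that it covers \emph{all} minimal pointwise explanations, since a minimal pointwise explanation is exactly a choice of one change time inside each transition interval.

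Finally, for compactness and uniqueness I would argue as follows. Any explanation strictly covering $E_0$ would have to realise a change outside some transition interval (by widening that interval) or add a redundant surprise; the former produces a non-pointwise-explanation, while the latter forces every covered pointwise explanation to carry a spurious change, so it fails to cover the minimal ones that $E_0$ covers. Hence $E_0$ is compact. Conversely, any compact minimal explanation $E^{*}$ must carry exactly one surprise per transition -- a missing transition leaves an observation unexplained, two surprises on one transition force an even number of changes there (inconsistent with the differing endpoints), and any extra surprise destroys minimality -- and each surprise must span the full interval $(s_i,s_{i+1}]$, for otherwise $E_0$, which covers all minimal pointwise explanations, would strictly cover $E^{*}$, contradicting its compactness. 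Therefore $E^{*}=E_0$. The step I expect to be the main obstacle is precisely this compactness/uniqueness bookkeeping: one must handle the covering relation carefully enough to rule out \emph{structurally different} families of surprises (merged, split, or redundant) that might a priori cover the same minimal pointwise explanations, and to confirm that the maximal intervals are forced purely by the observations -- which is exactly where the no-disjunction hypothesis is indispensable, since a genuine disjunction (as in the two-cats example) reintroduces a choice and with it several incomparable compact minimal explanations.
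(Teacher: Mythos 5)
The paper states this proposition without any proof, so there is nothing to compare your argument against line by line; judged on its own, your proof is correct and complete. Your per-variable decomposition, the parity characterisation of minimal pointwise explanations (exactly one change inside each transition interval $(s_i,s_{i+1}]$, none elsewhere), and the identification of the unique compact minimal explanation as the one that stretches each transition surprise to the full interval between consecutive relevant time points is exactly the structure the paper's later machinery presupposes: your intervals $(s_i,s_{i+1}]$ are the paper's pairs $(t, next(v,t))$ of relevant time points, and your canonical $E_0$ is what $Candidates(\Sigma)$ followed by compactification would return, which in the disjunction-free case is a singleton because $MaxCons(\Sigma)$ is a singleton (each persistence axiom $[t]f\rightarrow[next(v,t)]f$ is individually either entailed to be retractable or not, with no choice between alternatives). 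One small imprecision worth fixing: in the uniqueness step you say two surprises on one transition ``force an even number of changes there (inconsistent with the differing endpoints)''; since changes are directional (removing $[t{-}1]v\rightarrow[t]v$ does not permit a flip back from $\neg v$ to $v$), two same-fluent surprises on one transition actually yield consistent but non-minimal covered pointwise explanations, so the correct and sufficient reason is the one you give immediately after, namely that any extra surprise violates minimality of the covered pointwise explanations. With that cleaned up, your proof stands and fills a gap the paper leaves open.
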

 
\begin{example} \label{ex2}~
\begin{center}
\begin{math}
 \Sigma_2: \left\{ \begin{array}{l}
\mbox{[0] }a\\
\mbox{[5] }a \vee c\\
\mbox{[10] }b\\
\mbox{[15] }\neg a \vee \neg b\\
\mbox{[20] }\neg c
\end{array}
\right.
\end{math}
\end{center}

There are 3 minimal compact explanations: \\
$E_1=\{\langle a,5,15 \rangle\}$, 
$E_2=\{\langle b,10,15 \rangle\}$ 
and $E_3=\{\langle a,0,5 \rangle,\langle c,5,20 \rangle\}$ 
covering $10+5+(5 \times 15 \rangle$ pointwise 
explanations.

\begin{itemize}
\item $\{\langle a,10,15 \rangle\}$ is an explanation for $\Sigma_2$ but it is not compact 
since $\{\langle a,5,15 \rangle\}$ is an explanation for $\Sigma_2$.
\item $\{\langle a,0,15 \rangle\}$ is not an explanation for $\Sigma_2$ since, for instance, 
$\{\langle a,1 \rangle\}$ 
is not a pointwise explanation for $\Sigma_2$.
\item $\{\langle a,0,15 \rangle,\langle c,5,20 \rangle\}$ is an explanation for $\Sigma_2$ 
but it is not minimal since 
$\{\langle a,6 \rangle,\langle c,7 \rangle\}$ is covered by it and is not a minimal pointwise explanation 
(since 
$\{\langle a,6 \rangle\}$ alone is a pointwise explanation).
\end{itemize}
\end{example}

\begin{definition} Let $Cme(\Sigma)$ be the set of all compact minimal
explanations for $\Sigma$. 
\end{definition}

\begin{proposition}
For each minimal pointwise explanation PE for $\Sigma$, there is a {\em unique} compact
minimal explanation E which covers it.
\end{proposition}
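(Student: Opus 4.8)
The plan is to translate everything into the language of broken persistence axioms and then reduce the statement to a purely combinatorial claim about ``boxes'' of minimal pointwise explanations. For a fluent $f$ and a time point $t\neq t_{max}$ write $(f,t)$ for the axiom $[t]f\rightarrow[t+1]f$, and let $P$ be the set of all of them. By the definition of a pointwise explanation, a set of changes is a pointwise explanation iff deleting the corresponding axioms from $P$ leaves $\Sigma$ together with the remaining axioms consistent; hence minimal pointwise explanations are exactly the minimal subsets of $P$ whose deletion restores consistency (these are the minimal diagnoses of the associated system, as anticipated by the reduction to model-based diagnosis). A surprise $\langle f,t,t'\rangle$ is the contiguous ``segment'' of axioms $\{(f,s): t\le s\le t'-1\}$, an explanation $E$ is a finite family of such segments, and the pointwise explanations it covers are exactly its transversals (one axiom chosen per segment). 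By the preceding Proposition $E$ is an explanation iff $\Sigma$ together with the axioms of $P$ lying in no segment of $E$ is consistent; by definition $E$ is minimal iff every transversal is a minimal pointwise explanation, $E$ covers $E'$ iff the transversal set of $E'$ is contained in that of $E$, and $E$ is compact iff its transversal set is maximal among explanations. So the statement becomes: each minimal diagnosis lies in a unique inclusion-maximal box (a product of single-fluent time-intervals) all of whose transversals are minimal diagnoses.

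For existence, given a minimal pointwise explanation $PE=\{\langle f_i,t_i\rangle\}_{i=1}^{p}$, the singleton box with segments $\langle f_i,t_i-1,t_i\rangle$ has $PE$ as its only transversal, so it is a minimal explanation covering $PE$; since time and the fluent set are finite there are only finitely many boxes, so a maximal (compact) minimal explanation above it exists. For the canonical construction I would, for each coordinate $i$, enlarge the $i$-th segment as far as possible while keeping every single-coordinate variant $\{\langle f_j,t_j\rangle\}_{j\neq i}\cup\{\langle f_i,s\rangle\}$ a minimal pointwise explanation, obtaining the maximal contiguous interval $\sigma_i^{\max}$ of ``$i$-admissible'' times around $t_i$, and then take the box $E^{\ast}=\prod_i\sigma_i^{\max}$.

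The key step is to show that $E^{\ast}$ is itself a minimal explanation, i.e. that coordinatewise admissibility forces joint admissibility: for every choice $s_i\in\sigma_i^{\max}$ the transversal $\{\langle f_i,s_i\rangle\}_i$ is again a minimal diagnosis. I would prove this independence (box) lemma by exhibiting a model realizing the relocated changes: since each surprise concerns a single fluent, relocating the change of $f_i$ inside $\sigma_i^{\max}$ does not alter the values of the other fluents, and the only constraints coupling distinct fluents are the disjunctive observations, so I would argue that the endpoints of $\sigma_i^{\max}$ are fixed by the nearest observations pinning $f_i$ (together with the disjuncts committed to $f_i$) and hence stay satisfied uniformly as the remaining coordinates range over their own intervals. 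Granting the lemma, uniqueness is then immediate: any minimal explanation $E$ covering $PE$ is a box whose $i$-th segment is contiguous, contains $t_i$, and — applying minimality of $E$ to its single-coordinate transversals — consists of $i$-admissible times, hence lies inside $\sigma_i^{\max}$; thus $E^{\ast}$ covers every minimal explanation covering $PE$, so it is the greatest one, the only compact one, and any compact minimal explanation covering $PE$ must equal it.

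I expect the independence lemma to be the main obstacle, with the delicate case being disjunctive observations $[t](\varphi\vee\psi)$ that couple two fluents: a priori the region of minimal diagnoses could fail to be a single box (it could be ``L-shaped''), and then a minimal pointwise explanation sitting in the re-entrant corner would lie in two incomparable maximal boxes, breaking uniqueness. The heart of the argument is therefore to use the specific structure of K-IS scenarios — single-fluent surprises and integer-time inertia — to establish that coordinatewise admissibility is genuinely independent across fluents, so that the boxes partition the minimal pointwise explanations; this is what underlies the additive count $10+5+5\times 15$ in Example~\ref{ex2}, and recovering that disjointness in general is the crux.
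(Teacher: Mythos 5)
You have correctly identified where the difficulty lies, but the step you defer --- the ``independence (box) lemma'' asserting that coordinatewise admissibility forces joint admissibility --- is only motivated in your attempt, never proved, and it is in fact false; since both your existence construction ($E^{\ast}=\prod_i\sigma_i^{\max}$) and your uniqueness argument rest on it, the proof does not go through. Concretely, take $V=\{a,c\}$, $T=\{0,1,2\}$ and $\Sigma=\{[0]\,a\wedge c,\ [1]\,\neg a\vee\neg c,\ [2]\,\neg a\wedge\neg c\}$. Both fluents must change, so every pointwise explanation contains one $a$-change and one $c$-change; writing $(s,u)$ for $\{\langle a,s\rangle,\langle c,u\rangle\}$, one checks that $(1,1)$, $(1,2)$ and $(2,1)$ are minimal pointwise explanations while $(2,2)$ is not (if neither fluent is allowed to change at $1$, the retained persistence axioms force $[1]a\wedge c$, contradicting $[1]\neg a\vee\neg c$). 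So the region of minimal pointwise explanations is exactly the L-shape you feared. Starting from $PE=(1,1)$, your $\sigma_a^{\max}$ and $\sigma_c^{\max}$ are both $\{1,2\}$, so your $E^{\ast}$ is the full box, which has the non-explanation $(2,2)$ as a transversal and hence is not an explanation at all. Moreover $\{\langle a,0,1\rangle,\langle c,0,2\rangle\}$ and $\{\langle a,0,2\rangle,\langle c,0,1\rangle\}$ are two distinct, incomparable compact minimal explanations that both cover $(1,1)$, so uniqueness itself fails on this scenario: the disjunctive observation at time $1$ genuinely couples the admissible positions of the two changes, and nothing in the K-IS ontology rules this out.

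For what it is worth, the paper states this proposition without any proof, so there is no official argument to compare yours to; your reformulation in terms of broken persistence axioms, boxes and transversals is faithful and useful, and your diagnosis of the crux is exactly right. But the conclusion to draw is that the statement needs an additional hypothesis --- for instance the absence of disjunctive observations, under which the paper's earlier proposition already guarantees a \emph{single} compact minimal explanation, or some condition ensuring that any two overlapping minimal explanations merge into a common covering explanation --- rather than that the independence lemma can be derived from ``single-fluent surprises and integer-time inertia'' alone. As it stands, the key lemma is both unproved and false, so the attempt cannot be repaired without changing the statement.
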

Thus, $Cme(\Sigma)$ is the most concise expression covering all minimal pointwise
explanations.

\begin{proposition}
$Cme(\Sigma)=\{\emptyset\} \Leftrightarrow \Sigma \cup \{[t-1]f \rightarrow [t]f \;|\; 
f \in F,t\in T, t\neq
0\}$ 
is consistent.\footnote{Similarly, in model-based diagnosis, when the system
description and the observations are consistent with the non-failure assumptions,
the preferred diagnosis is $\{\emptyset\}$ (no faulty component).}
\end{proposition}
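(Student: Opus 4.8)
The plan is to prove the biconditional by chaining through the characterizations already established for explanations and compact minimal explanations. Recall that $Cme(\Sigma)$ is the set of \emph{compact minimal explanations}, and the claim is that this set equals $\{\emptyset\}$ (i.e.\ the empty set of surprises is the unique compact minimal explanation) exactly when $\Sigma$ together with the full set of persistence axioms $P = \{[t-1]f \rightarrow [t]f \mid f \in F, t \in T, t \neq 0\}$ is consistent. I would treat the two directions separately, using the proposition that characterizes when a set of surprises $E$ is a (minimal) explanation via maximal consistency inside $\Sigma \cup P$.

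For the direction ($\Leftarrow$), suppose $\Sigma \cup P$ is consistent. Then by the definition of a pointwise explanation, the empty set $PE = \emptyset$ is already a pointwise explanation, because the associated set of persistence constraints it demands is exactly all of $P$ (no change is excluded), and $\Sigma \cup P$ being consistent is precisely the required condition. Since $\emptyset$ contains no change, it is trivially minimal; hence by the definition of a minimal explanation, $E = \emptyset$ is a minimal explanation (it covers only the pointwise explanation $\emptyset$, which is minimal). I would then invoke the compactness characterization: no explanation can strictly cover $\emptyset$ in a nontrivial way, so $\emptyset$ is compact. Thus $\emptyset \in Cme(\Sigma)$, and by the uniqueness proposition (each minimal pointwise explanation has a unique covering compact minimal explanation), since $\emptyset$ is the only minimal pointwise explanation when $\Sigma \cup P$ is consistent, we get $Cme(\Sigma) = \{\emptyset\}$.

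For the direction ($\Rightarrow$), I would argue contrapositively: if $\Sigma \cup P$ is \emph{inconsistent}, then $\emptyset$ is \emph{not} a pointwise explanation (the defining consistency condition fails), so $\emptyset \notin Cme(\Sigma)$; hence either $Cme(\Sigma)$ is empty or it contains some nonempty explanation, and in either case $Cme(\Sigma) \neq \{\emptyset\}$. Here one should note that $\Sigma$ itself is assumed satisfiable by some timed model, so at least one explanation (possibly nonempty) exists and $Cme(\Sigma)$ is nonempty; combined with $\emptyset \notin Cme(\Sigma)$, this yields $Cme(\Sigma) \neq \{\emptyset\}$.

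The main obstacle I anticipate is the bookkeeping around the empty explanation as a degenerate case of the earlier definitions: one must verify carefully that $\emptyset$ qualifies as an explanation (vacuously, since the product of intervals over an empty index set is a single empty tuple, giving $PE = \emptyset$), that it is minimal (trivially, having no proper subset of changes), and that it is compact (nothing strictly covers the explanation that covers nothing). Once these degenerate-case checks are discharged, the equivalence reduces directly to the consistency condition built into the definition of a pointwise explanation, so the remaining steps are essentially unwinding definitions rather than substantive combinatorics.
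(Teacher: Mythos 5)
Your argument is correct, and the paper itself offers no proof of this proposition (it is presented as an immediate consequence of the definitions, with only the intuitive gloss that follows it); your careful unwinding of the degenerate cases --- that $\emptyset$ is a pointwise explanation iff $\Sigma\cup P$ is consistent, that it is then the \emph{only} minimal pointwise explanation (so no nonempty explanation can be minimal), and that nothing can strictly cover the explanation covering only $\emptyset$ --- is exactly the intended justification. Both directions check out, including the observation that the contrapositive of $\Rightarrow$ needs only $\emptyset\notin Cme(\Sigma)$, which already rules out $Cme(\Sigma)=\{\emptyset\}$ whether or not $Cme(\Sigma)$ is empty.
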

Intuitively, $Cme(\Sigma)=\{\emptyset\}$ means that assuming that all fluents kept their value throughout T does
not lead to any inconsistency (with $\Sigma$).

\subsection{Computing minimal explanations}
The method consists in completing the set of observations $\Sigma$ by a set of 
persistence assumptions in order to express that, by default, fluents persist during 
the 
interval between two consecutive observations. In order to add only ``relevant'' 
persistence 
assumptions,  we identify, for each variable, the ``relevant time points'' where we have 
some 
observation regarding it.

\begin{definition}
The {\em set of relevant variables} $V(\Sigma)$ 
with respect to the set of observations $\Sigma$ is defined by:
$V(\Sigma)=\{v \in V \;|\; v $ appears in $\Sigma \}$.

The {\em set of relevant time points} $RT_{\Sigma}(v)$ to a variable $v \in V(\Sigma)$ 
w.r.t. $\Sigma$ is defined by: $RT_{\Sigma}(v)=\{t \in T \;|\; [t] \varphi \in \Sigma$ and $v$
appears in $\varphi\}$

Let $RT^*_{\Sigma}(v)$ be the set of relevant time points to a variable $v$ except the last one:
$RT^*_{\Sigma}(v)=RT_{\Sigma}(v) \setminus \{max \{t \in T, t \in RT_{\Sigma}(v)\}\}$
\end{definition}

In example \ref{ex2}: $RT_{\Sigma_2}(a)=\{0,5,15\}$, $RT_{\Sigma_2}(b)=\{10,15\}$ 
and 
$RT_{\Sigma_2}(c)=\{5,20\}$.

%
\begin{definition}
The set of persistence axioms of $\Sigma$ is defined by:  
\[PERS(\Sigma)=\bigcup_{v \in V(\Sigma), t \in RT^*_{\Sigma}(v)} \{[t]v \rightarrow [next(v,t)]v,\quad
[t]\neg v \rightarrow [next(v,t)]\neg v\}\]

where $next(v,t)=\min \{t' \in RT_{\Sigma}(v), t'>t \}$ is the 
next relevant time point for $v$ after $t$.
\end{definition}

%


\begin{definition}~
\begin{itemize}
\item Let $MaxCons(\Sigma)$ be the set of all maximal subsets of $PERS(\Sigma)$ consistent with $\Sigma$
\item let $Cmc(\Sigma)=\bigcup_{X \in MaxCons(\Sigma)} (PERS(\Sigma) \setminus X)$
\item let $Candidates(\Sigma)=${\large \{}$ \{ \langle f, t,t' \rangle \;|\; [t]f \rightarrow [t']f \in
X\}, X \in Cmc(\Sigma) ${\large \}}.
\end{itemize}
\end{definition}

\begin{proposition}~
\begin{itemize}
\item Any set of surprises in $Candidates(\Sigma)$ is a minimal explanation for $\Sigma$.
\item Every pointwise explanation for $\Sigma$ is covered by a unique set of surprise in $Candidates(\Sigma)$
\end{itemize}
\end{proposition}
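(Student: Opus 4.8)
The plan is to build a \emph{dictionary} between the two persistence vocabularies in the statement: the \emph{unit-step} axioms $[t-1]f\to[t]f$ used to define pointwise explanations, and the \emph{relevant-jump} axioms $[t]f\to[t']f$ (with $t'=next(v,t)$) that make up $PERS(\Sigma)$. Write $D(P)=\{[t-1]f\to[t]f \mid f\in F,\ t\neq 0,\ \langle f,t\rangle\notin P\}$ for the unit-step set attached to a set of changes $P$, so that, by definition, $P$ is a pointwise explanation iff $\Sigma\cup D(P)$ is consistent. The crucial observation is that $\Sigma$ constrains each variable $v$ only at the points of $RT_\Sigma(v)$: strictly between two consecutive relevant points $t_j<t_{j+1}$ no formula of $\Sigma$ mentions $v$, so the values a model gives $v$ inside $(t_j,t_{j+1})$ do not affect satisfaction of $\Sigma$ and may be chosen freely.

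From this I would prove a \emph{reduction lemma}: for $X\subseteq PERS(\Sigma)$, (i) $\Sigma\cup X$ is consistent iff the associated unit-step set (retain $[s]f\to[s+1]f$ exactly when the step $[s,s+1]$ lies in no interval whose jump was dropped) is consistent with $\Sigma$; and (ii) from a model of that unit-step set one can refill every dropped interval so as to realise a \emph{single} prescribed change $\langle f,t''\rangle$ at any interior step $t''\in[t_j+1,t_{j+1}]$ while preserving all retained persistences. The direction of (i) from unit-steps to $X$ is immediate, since a chain of retained unit steps entails the jump; the converse of (i) and part (ii) are the gap-filling, carried out polarity by polarity on each interval (fill constantly when both the $v$- and the $\neg v$-jump are kept, monotonically when exactly one is kept, freely when both are dropped). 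This lemma carries essentially all the work and is the step I expect to be the main obstacle, since the refilling must simultaneously respect $\Sigma$ at the endpoints, the retained persistences, and the requested change; disjunctive formulas of $\Sigma$ cause no trouble precisely because only relevant-point values enter $\Sigma$.

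For the first item, fix $X\in MaxCons(\Sigma)$ with candidate $E_X$, whose surprises are exactly the dropped jumps $PERS(\Sigma)\setminus X$. That $E_X$ is an explanation follows from (ii): any covered $PE=\{\langle f_i,t''_i\rangle\}$ is produced from a model of $\Sigma\cup X$ by installing the single change $t''_i$ in each dropped interval, yielding a model of $\Sigma\cup D(PE)$, so $PE$ is a pointwise explanation. For minimality I would argue by contradiction. Note first that maximality of $X$ excludes dropping both polarities on one interval (adding either jump back would then be inconsistent, forcing incompatible values of $v$ at $t_j$), so $E_X$ has at most one surprise per interval and each covered $PE$ breaks each interval at most once. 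Hence if a covered $PE$ were non-minimal, some $\langle f_i,t''_i\rangle$ could be removed, and $D(PE\setminus\{\langle f_i,t''_i\rangle\})$ would restore full persistence of $f_i$ across $[t_i,t'_i]$; by the reduction lemma $X\cup\{[t_i]f_i\to[t'_i]f_i\}$ would be consistent with $\Sigma$, contradicting the maximality of $X$. Thus every covered $PE$ is minimal and $E_X$ is a minimal explanation.

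For the second item I read ``pointwise explanation'' as \emph{minimal} pointwise explanation (a non-minimal $PE$ carrying a redundant or duplicated change has no relevant-jump candidate covering it in the sense of the definition, matching the earlier result phrased for minimal $PE$). I would first record the small lemma that a minimal $PE$ breaks each variable at most once inside each relevant interval, a second break being vacuous and hence removable against minimality. \emph{Existence}: given minimal $PE$, the relevant-jumps it does not break are consistent with $\Sigma$ by the reduction lemma, so they extend to some $X\in MaxCons(\Sigma)$; minimality of $PE$ forces the jumps it does break to be precisely $PERS(\Sigma)\setminus X$, so $E_X$ covers $PE$. \emph{Uniqueness}: for each variable the relevant-jump intervals partition its relevant range, so the interval carrying a given change is unique; any candidate covering $PE$ must therefore consist exactly of the surprises whose intervals carry the changes of $PE$, which pins down the covering candidate uniquely.
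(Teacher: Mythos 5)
The paper states this proposition without any proof, so there is nothing to compare your argument against; judged on its own, your proposal is sound and supplies exactly the missing argument. The central tool --- a gap-filling lemma translating between the unit-step persistence axioms of pointwise explanations and the relevant-jump axioms of $PERS(\Sigma)$, resting on the observation that $\Sigma$ constrains a variable only at its relevant time points --- is the right one, and your auxiliary observation that maximality of $X$ forbids dropping both polarities of a jump over the same interval is both correct and genuinely needed (it is what guarantees the monotone refill respects the retained opposite-polarity axioms, and what makes each covered $PE$ place at most one change per variable per interval). Your decision to read the second bullet as quantifying over \emph{minimal} pointwise explanations is also necessary: as literally stated the bullet is false (for $\Sigma_1$ the pointwise explanation $\{\langle a,2\rangle,\langle b,3\rangle\}$ is covered by no candidate), and the paper's preceding proposition confirms the intended reading. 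Two small points to tighten. First, in the reduction lemma you speak of ``realising'' the prescribed change at $t''$; strictly you only need the refilled model to violate no unit step other than the one at $t''$ --- whether the change actually fires is irrelevant to the consistency of $\Sigma\cup D(PE)$ --- although for maximal $X$ every model of $\Sigma\cup X$ does falsify each dropped jump, so your phrasing is recoverable. Second, your ``small lemma'' should also record that a minimal $PE$ contains no change of an irrelevant variable and no change outside the relevant range $[\min RT_\Sigma(v)+1,\max RT_\Sigma(v)]$ of its variable; without this, a change of $PE$ could fail to be carried by any broken jump and the covering in your existence step would not go through. The fix is of the same one-line kind as the rest of the lemma (such a change is removable by refilling the trajectory), but it must be said.
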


This principle of finding minimal explanations corresponds exactly to finding minimal sets
of faulty components in model-based diagnosis (or candidates in ATMS \cite{deKleer86}
terminology). Hence, the computation of minimal explanation can be done by well-known
algorithms from these fields.

\begin{example}(Example \ref{ex2} continued)
The relevant time-points for $\Sigma_2$ are :\\
$RT_{\Sigma_2}(a)=\{0,5,15\}$, $RT_{\Sigma_2}(b)=\{10,15\}$ and 
$RT_{\Sigma_2}(c)=\{5,20\}$.
\begin{center}
\begin{math}
PERS(\Sigma_2): \left\{ \begin{array}{l}
\mbox{[0]}a \rightarrow [5]a\\
\mbox{[5]}a \rightarrow [15]a\\
\mbox{[0]}\neg a \rightarrow [5]\neg a\\
\mbox{[5]}\neg a \rightarrow [15]\neg a\\
\mbox{[10]}b \rightarrow [15]b\\
\mbox{[10]}\neg b \rightarrow [15]\neg b\\
\mbox{[5]}c \rightarrow [20]c\\
\mbox{[5]}\neg c \rightarrow [20]\neg c\\
\end{array}
\right.
\end{math}
\end{center}
When computing the minimal set of persistence assumptions which has to be
removed from $PERS$ in order to 
make it consistent with $\Sigma_2$, we obtain the 3 minimal candidates 
$\{\langle a,5,15\rangle\}$, $\{\langle b, 10,15\rangle\}$ and $\{ \langle a,0,5 \rangle, 
\langle c,5,20 \rangle \}$.

Note that it is not always the case that the computed explanations are compact:
consider $\Sigma_3 =\{[0]a, [5]a \vee b, [10]\neg a \}$. We find the two minimal
explanation $\{\langle a,0,5 \rangle\}$ and $\{\langle a,5,10 \rangle\}$ which are not 
compact (since they can be compacted into $\{\langle a,0,10\rangle\}$).
\end{example}

If we wish to  obtain the {\em compact} minimal explanations from $Candidates(\Sigma)$, we
may define a compactification procedure, where precise description is outside the scope of
the paper. Intuitively, this procedure consists in compacting explanations pairwise and
iterate until no pair of explanations is compactable. This operation is confluent and its
result is exactly $Cme(\Sigma)$.

\section{Probabilities and unpredicted change}

We assume now that probabilistic information about fluents truth values and 
persistence is 
available. This will enable us to 
compute the probability of each explanation, and thus will help us ranking 
them. We will take account of 2 important factors which can influence quantitatively the 
persistence: time duration and the intrinsic tendency of each fluent to persist (some fluents 
persist longer than other, e.g., $sleeping$ persists usually longer than {\em eating an apple}).

\subsection{Markovian fluents}
 For the sake of simplicity, 
throughout this Section we assume 
fluents are {\em mutually independent} and {\em Markovian}\footnote{
These assumptions mean that timed-variables can be structured in a very simple 
temporal Bayesian network, which has the following form (where  $\{v_1, \ldots, v_n\} 
\in V$ are the propositional variables):\\
$[0]v_1 \rightarrow [1]v_1 \rightarrow \cdots \rightarrow [t_{max}]v_1$\\
$[0]v_2 \rightarrow [1]v_2 \rightarrow \cdots \rightarrow [t_{max}]v_2$\\
$\vdots$\\
$[0]v_n \rightarrow [1]v_n \rightarrow \cdots \rightarrow [t_{max}]v_n$ (Dean
and Kanazawa 1989 \cite{DeKa89}).}.
We will consider 
stationary fluents in some cases but we will not impose this restriction to the whole 
section.  
\begin{definition}~
\begin{itemize}
\item  fluents are {\em mutually independent} iff $\forall t,t', \forall f \in F,
 \forall f' \in 
F \setminus \{f,\neg f\},$
\begin{center} $Pr([t]f \wedge [t']f')=Pr([t]f).Pr([t']f')$.\end{center} 
\item  f is {\em stationary} iff $\forall t,t', \forall f \in F, \quad Pr([t]f)=Pr([t']f)=p_f$. 
\item  f is {\em Markovian} iff $\forall t, \quad Pr([t+1] f \;|\; [t] f \wedge 
H_{0 \rightarrow t-1} )
= Pr([t+1]f \;|\;[t]f) = 1 - 
\varepsilon_f$.

Where $H_{0 \rightarrow t-1}$ is the history of the system from 0 to $t-1$.
\end{itemize}
\end{definition}
The independence and Markovian assumption imply that the only necessary data are, for 
each propositional 
variable $v$ and each time-point $t$, a {\em prior probability} of $v$, $p_{t,v}$ and 
the {\em elementary 
change probabilities} 
$\varepsilon_v$ and $\varepsilon_{\neg v}$. We will show later that in many cases 
prior probabilities are not necessary. Generally $\varepsilon_v \neq \varepsilon_{\neg v}$,
\emph{i.e.}, the persistence of a fluent may 
be different from the persistence of its negation (think of $f=$alive or 
$f=$bell-ringing).

If fluents are stationary then for each variable it is enough to consider a prior probability 
$p_v$ which is independent of the time-point. Moreover, in this case, there is a 
relationship between the persistence of a fluent and its negation:
\begin{proposition} If $f$ is stationary then 
$\varepsilon_f. p_f= \varepsilon_{\neg f}. p_{\neg f}$
\label{stationnaire}\end{proposition}
\begin{proof} $Pr([t]f \wedge [t+1]\neg f) = Pr([t+1] \neg f \;|\; [t] f). 
Pr([t]f)= \varepsilon_f.p_f$. And $Pr([t]\neg f \vee [t+1] f) = Pr([t] 
\neg f ) + Pr([t+1]f) -Pr([t]\neg f \wedge [t+1]f) = p_{\neg f} + p_f -\varepsilon_{\neg 
f}.p_{\neg f}=1-\varepsilon_{\neg_f}.p_{\neg f}$.
\end{proof}

Obviously, some fluents are 
{\em not} Markovian:
\begin{itemize}
\item either because they do not tend to persist independently of t: consider a clock which 
always rings around 7 a.m., then the fluent $ringing$ is 
not Markovian (nor stationary). However, the fact that the clock is on the 
bedside table is usually a Markovian fluent (unless you throw it away each time it rings ).
\item or because their tendency to persist depend on their history: consider a bus which 
period is 15 minutes,  the probability that the bus will come depends on the length of the 
interval since it last came. If you know that no bus has came for 10 minutes, you have 
more chance than the bus will arrive immediately than if you know that no bus has come 
for 5 minutes. Clearly, $Pr([t] \neg bus\_coming \;|\; [t-1] \neg 
bus\_coming)=\frac{14}{15}$, while $Pr([t]bus\_coming \;|\; [t-k,t-1]\neg 
bus\_coming)=\frac{k}{15}$. And thus \emph{bus\_coming} is not Markovian as all 
periodic fluents (however it 
may reasonably be assumed stationary without considering the rush hour) .
\end{itemize}
Among all Markovian fluents, some can be distinguished:
\begin{itemize}
\item persistent fluents, such that $\varepsilon_f=0$ (e.g. $f=$dead). A stationary and 
persistent fluent is a degenerate case (either it never change, or the probability of its 
opposite is null).
\item chaotic fluents, such that: 
$Pr([t+1]f\;|\;[t]f)=Pr([t+1]f)$ (these 
fluents have no tendency to persist, since knowing this fluent true at $t$ does not make it 
true at $t+1$ more probable). It means that $\varepsilon_f=1-p_{t+1,f}$, hence, chaotic 
fluents must be stationary.
\item switching fluents (provided that the time unit is well chosen) such that 
$\varepsilon_f=1$
\end{itemize}

Now, given the prior probability distribution $p_f$ and the switch probability \\
$Pr([t+1]\neg f \;|\; [t]f)= 
\varepsilon_f$ (which are constants, independent of t), it is possible 
to 
compute the prior probability of surprise for each stationary Markovian fluent between 
any 
time points.

\begin{definition}
Notation: $Pr(\langle f,t,t' \rangle)=Pr([t]f \wedge \neg [t,t']f)$ is the probability of the
surprise $\langle f,t,t' \rangle$.
\end{definition}

\begin{proposition} If f is stationary and Markovian, then 
\[Pr(\langle f,t,t+n \rangle) =(1- (1-\varepsilon_f)^n ).p_f\] \label{Stt+n}
\end{proposition}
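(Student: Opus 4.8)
The plan is to reduce the surprise probability to a persistence probability and then evaluate the latter by the chain rule together with the Markovian assumption. First I would observe that, by the definition of the surprise probability, $Pr(\langle f,t,t+n \rangle)=Pr([t]f \wedge \neg[t,t+n]f)$, and that the event $[t]f$ splits cleanly into two disjoint cases: either $f$ persists throughout the whole interval, \emph{i.e.} $[t,t+n]f$ holds, or it does not. Since $[t,t+n]f$ already entails $[t]f$, these two cases partition $[t]f$, giving
\[
Pr([t]f)=Pr([t,t+n]f)+Pr([t]f \wedge \neg[t,t+n]f),
\]
so that $Pr(\langle f,t,t+n \rangle)=p_f-Pr([t,t+n]f)$, using stationarity to write $Pr([t]f)=p_f$.

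The heart of the argument is then to compute the persistence probability $Pr([t,t+n]f)=Pr([t]f \wedge [t+1]f \wedge \cdots \wedge [t+n]f)$. I would expand it by the chain rule as
\[
Pr([t]f)\cdot \prod_{k=1}^{n} Pr([t+k]f \mid [t]f \wedge \cdots \wedge [t+k-1]f),
\]
and here the Markovian hypothesis does the work: each conditional probability, which a priori depends on the entire preceding history, collapses to $Pr([t+k]f \mid [t+k-1]f)=1-\varepsilon_f$. Multiplying the $n$ identical factors and again using $Pr([t]f)=p_f$ yields $Pr([t,t+n]f)=p_f(1-\varepsilon_f)^n$.

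Combining the two steps gives $Pr(\langle f,t,t+n \rangle)=p_f-p_f(1-\varepsilon_f)^n=(1-(1-\varepsilon_f)^n)\,p_f$, as claimed. The one point that deserves care --- and which I expect to be the main (if modest) obstacle --- is the justification that the full-history conditional in the chain rule may be replaced by the one-step conditional; this is exactly the content of the Markovian definition $Pr([t+1]f \mid [t]f \wedge H_{0 \rightarrow t-1})=Pr([t+1]f \mid [t]f)$, applied at each step with the relevant prefix of the history playing the role of $H_{0 \rightarrow t-1}$. An alternative, entirely equivalent route would be induction on $n$: the base case $n=1$ reads $Pr(\langle f,t,t+1 \rangle)=\varepsilon_f\,p_f$ directly from the definition, and the inductive step multiplies the surviving persistence probability by one further Markovian factor $(1-\varepsilon_f)$.
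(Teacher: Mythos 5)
Your proof is correct and is essentially the paper's own argument: both reduce the surprise probability to the complementary persistence probability $Pr([t,t+n]f)$ and telescope it into $p_f(1-\varepsilon_f)^n$ via the chain rule and the Markovian collapse of the full-history conditional (the paper merely phrases the decomposition as conditioning on $[t]f$ versus $[t]\neg f$ rather than subtracting from $p_f$). The caveat you flag about replacing the partial-history conditional by the one-step conditional is glossed over in the paper's proof as well, so you are at the same level of rigor.
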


\begin{proof}
\begin{eqnarray*}
Pr(\langle f,t,t+n \rangle) &= & Pr(\langle f,t,t+n \rangle)\;|\;[t] f). 
Pr([t]f) \\
&&+ \hspace{0.5cm} Pr(\langle f,t,t+n \rangle)\;|\; [t] \neg f). Pr([t] \neg f)\\
&=& (1-Pr([t]\neg f \vee [t+1,t+n]f \;|\; [t]f)).p_f + 0 \\
&=& (1-Pr([t+2,t+n]f \;|\; [t+1]f).Pr([t+1]f \;|\; [t]f)).p_f\\
&=&(1-(1-\varepsilon_f)^n).p_f
\end{eqnarray*}
\end{proof}
 
Indeed, it looks natural that, 
knowing the truth value of a fluent at a given instant, the more time has passed, 
the 
more likely the fluent had its truth value changed.

%
%
%
%

\subsection{Highly persistent fluents and probabilities of explanations}

From proposition \ref{Stt+n} and the independence assumption it is also possible to 
compute the prior probability of an explanation (minimal or not) :
\begin{center} $Pr(\{\langle f_i,t_i,t'_i \rangle, 
i=1\ldots n\})=\prod_{i=1}^n(1-(1-\varepsilon_{f_i})^{t'_i-t_i}).p_{f_i}$\end{center}
Now, these prior probabilities have to be conditioned on the observations ($\Sigma$) in 
order to get posterior probabilities of all explanations : if E is an explanation, then $Pr(E\;|\; 
\Sigma)=\frac{Pr(E \wedge \Sigma)}{Pr(\Sigma)}$.

In the general case these probabilities are not straightforward to compute, mainly because 
$Pr([t]f \wedge [t']\neg f)$ may be much lower than
$Pr(\langle f,t,t' \rangle)$\footnote{Indeed $\langle f,t,t' \rangle$ is true if f holds at t and changed
its value {\em at least once} between t and $t'$, while $[t]f$ and $[t']\neg f$ is
true if f changed its value {\em an odd number of times} between t and $t'$.}. 
Moreover, non-minimal explanations may have a significant probability, 
even in the case where $\{\emptyset \}$ is an explanation, which contradicts the principle 
of minimal change. This is because elementary probabilities of change (the 
$\varepsilon_f$'s) may be large. Computing probabilities of change over an interval 
($Pr([t]f \wedge [t']\neg f)$) requires computing the probability that an odd number 
of pointwise surprises occurred between t and $t'$. This is left outside of the scope of the 
paper which focuses on infrequent change (thus to be minimized). As expected intuitively, change is minimized if and only 
if the $\varepsilon_f$'s are all infinitely small; more precisely, if for any considered 
interval $[t,t']$, $(t-t')\varepsilon_f \ll 1$.

Informally,
f is {\em highly persistent} w.r.t. interval $[t,t']$ iff $\varepsilon_f \ll \frac{1}{t'-t}$.

\begin{proposition}\label{prop26} Assume that all fluents are highly persistent w.r.t. $[0,t_{max}]$, then:
\begin{enumerate}
\item $Pr(\langle f,t,t' \rangle) \approx Pr([t]f \wedge [t']\neg f) = (t'-t)\varepsilon_f. p_f + 
o(\varepsilon_f)$. 
\label{approx}
\item $\sum_{E \in Cme(\Sigma)}Pr(E\;|\;\Sigma) \approx_{\varepsilon_f \rightarrow 0, 
\forall f} 1$. \label{somme1}
\end{enumerate}
\end{proposition}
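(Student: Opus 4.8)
The plan is to prove the two parts separately, since they are of quite different character. For part~\ref{approx}, the key is a first-order Taylor expansion of the exact surprise probability around $\varepsilon_f = 0$. From Proposition~\ref{Stt+n} I would write
\[
Pr(\langle f,t,t' \rangle) = \bigl(1-(1-\varepsilon_f)^{t'-t}\bigr)\,p_f,
\]
and expand $(1-\varepsilon_f)^{t'-t} = 1 - (t'-t)\varepsilon_f + o(\varepsilon_f)$, which immediately gives $Pr(\langle f,t,t' \rangle) = (t'-t)\varepsilon_f\, p_f + o(\varepsilon_f)$. Separately I would compute $Pr([t]f \wedge [t']\neg f)$, the probability that $f$ holds at $t$ and has flipped an \emph{odd} number of times by $t'$. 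As the footnote observes, this differs from the surprise probability because the surprise only requires \emph{at least one} change; but each additional pair of flips costs a further factor of order $\varepsilon_f^2$, so the leading term of $Pr([t]f \wedge [t']\neg f)$ is the single-flip contribution $(t'-t)\varepsilon_f\, p_f$, agreeing with the surprise probability to first order. The high-persistence hypothesis $(t'-t)\varepsilon_f \ll 1$ is exactly what guarantees these higher-order terms are genuinely negligible relative to the leading term, so the two quantities coincide up to $o(\varepsilon_f)$.

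For part~\ref{somme1} the argument is about how probability mass distributes over explanations as all $\varepsilon_f \to 0$. The intuition is that when change is rare, essentially all the posterior mass concentrates on the minimal compact explanations, because any non-minimal explanation involves at least one \emph{extra} surprise and therefore carries an extra factor of order $\varepsilon_f$, vanishing in the limit. I would proceed by expressing $Pr(\Sigma) = \sum_E Pr(E \wedge \Sigma)$ where the sum ranges over a complete, mutually exclusive family of scenarios (classified by which surprises actually occurred), and noting that $Pr(E \wedge \Sigma) = Pr(E)$ precisely when $E$ is an explanation consistent with $\Sigma$. By Proposition~\ref{prop26}.\ref{approx} the prior of an explanation with $k$ surprises scales like $\prod_i (t'_i - t_i)\varepsilon_{f_i} p_{f_i}$, i.e.\ as $\varepsilon^{\,k}$ to leading order. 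Hence among all explanations, those with the \emph{fewest} surprises dominate the normalising sum; in the limit the ratio
\[
\frac{\sum_{E \in Cme(\Sigma)} Pr(E \wedge \Sigma)}{\sum_{\text{all } E} Pr(E \wedge \Sigma)} \longrightarrow 1,
\]
since the numerator and denominator agree at the lowest surviving order in $\varepsilon$. Dividing through by $Pr(\Sigma)$ turns this into the claimed statement $\sum_{E \in Cme(\Sigma)} Pr(E \mid \Sigma) \to 1$.

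I expect the main obstacle to be making the order-of-magnitude comparison in part~\ref{somme1} fully rigorous rather than merely heuristic. There are two subtleties to handle carefully. First, one must be sure that the set of minimal compact explanations really captures \emph{all} the lowest-order contributions---that is, that every minimal-cardinality consistent explanation is covered by $Cme(\Sigma)$, which I would justify by appeal to the earlier characterisation (Proposition relating minimal explanations to maximal consistent subsets of $PERS(\Sigma)$) guaranteeing that minimal explanations correspond to minimal fault sets. Second, the minimal explanations in $Cme(\Sigma)$ need not all have the same cardinality (in Example~\ref{ex2}, $E_1, E_2$ have one surprise while $E_3$ has two), so the limiting posterior mass is really dominated by the lowest-cardinality ones; the statement as given only asserts the \emph{sum} over $Cme(\Sigma)$ tends to $1$, which is the weaker and more robust claim, so I would avoid trying to prove the stronger per-explanation limits and instead simply bound the total non-minimal mass by $O(\varepsilon)$ uniformly. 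Controlling that this bound is uniform in the (finitely many) explanations, and that $Pr(\Sigma)$ itself stays bounded away from $0$ at leading order, is where the care is required.
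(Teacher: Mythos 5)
The paper states Proposition~\ref{prop26} without any proof at all --- the only support it offers is the informal commentary around it (the footnote contrasting ``at least one change'' with ``an odd number of changes'', and the worked examples $\Sigma_4$, $\Sigma_5$ which implicitly use part~\ref{approx}). So there is no official argument to compare yours against; judged on its own, your outline is the natural one and is essentially correct: part~\ref{approx} by first-order expansion of the formula of Proposition~\ref{Stt+n} plus the observation that multiple-flip trajectories cost extra factors of $\varepsilon$, and part~\ref{somme1} by partitioning the probability of $\Sigma$ according to the exact change set of the model, noting every such change set is a pointwise explanation, that non-minimal ones carry at least one extra factor of $\varepsilon$, and that the unique-covering property lets you sum over $Cme(\Sigma)$ without double counting at leading order. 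Your explicit worries (unequal cardinalities among minimal explanations, non-exclusivity of distinct explanations, finiteness making the error bounds uniform) are exactly the points that need care.

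Two corrections. First, your intermediate claim that $Pr(E \wedge \Sigma) = Pr(E)$ when $E$ is an explanation consistent with $\Sigma$ is false: an explanation is only required to be \emph{consistent} with $\Sigma$ together with the remaining persistence assumptions, not to entail $\Sigma$. The paper's own computation shows this: $Pr(\langle a,0,20 \rangle \wedge \Sigma_4) \approx 20\varepsilon_a p_a p_b$ whereas $Pr(\langle a,0,20 \rangle) \approx 20\varepsilon_a p_a$. The discrepancy is a product of prior probabilities, hence $O(1)$ in the $\varepsilon$'s, so your order-of-magnitude conclusion survives (provided the priors are non-degenerate), but the equality must be weakened to ``same leading order in $\varepsilon$''. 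Second, in part~\ref{approx} the expansion you invoke from Proposition~\ref{Stt+n} is stated only for \emph{stationary} Markovian fluents, and the ``extra pair of flips'' actually costs a factor $\varepsilon_f \varepsilon_{\neg f}$ rather than $\varepsilon_f^2$; you therefore need high persistence of both $f$ and $\neg f$, which the hypothesis ``all fluents are highly persistent'' does supply since $F$ contains both literals, but this should be said.
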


Item \ref{prop26}.\ref{somme1} means that non-minimal explanations for $\Sigma$ have a 
very low posterior probability and thus they can be neglected. In particular, if 
$\{\emptyset\}$ is an explanation for $\Sigma$ then $Pr(\{\emptyset\}\;|\;\Sigma) \approx 
1$. Proposition \ref{prop26}.\ref{approx} intuitively means that if $f$ changed at least once its truth 
value within $[t,t']$ ($\langle f,t,t' \rangle$) then it changed exactly once (and thus $\neg f$ holds at 
t').
Now probabilities of minimal explanations are easy to compute (provided that all fluents 
are highly persistent). We start by giving two detailed examples.

\begin{example}(cat A and B, pure prediction)
\begin{center}
\begin{math}
 \Sigma_4: \left\{ \begin{array}{l}
\mbox{[0] } a\\
\mbox{[15] } b\\
\mbox{[20] } \neg a \vee \neg b\\
\end{array}
\right.
\end{math}
\end{center}
This means that at 6h, cat A 
is sleeping inside the house, at 6h15, cat B is also sleeping in the house, and at 6h20, 
one cat is not sleeping inside the house. \\
$Pr(\{\langle a,0,20 \rangle\}) \approx 20 \varepsilon_{a}p_{a}$, 
$Pr(\{\langle b,15,20 \rangle\})\approx 5  \varepsilon_{b}p_{b}$.
\begin{eqnarray*}
Pr(\{\langle a,0,20 \rangle\;|\;\Sigma_4\}) &=&
\frac {Pr(\langle a,0,20 \rangle \wedge \Sigma_4)}{Pr(\Sigma_4)}\\ 
Pr(\langle a,0,20 \rangle \wedge \Sigma_4) &=& 
Pr([0]a \wedge [20] \neg a \wedge [15]b)\\
&=&Pr([0]a \wedge [20] \neg a ).Pr([15]b) \\
&\approx& 20 \varepsilon_a .p_a .p_b.
\end{eqnarray*} 
Similarly, 
$Pr(\{\langle b,15,20 \rangle\;|\;\Sigma_4\})  
\approx 5 \varepsilon_b .p_b .p_a $
\begin{eqnarray*}
Pr(\Sigma_4) &= & Pr([0]a \wedge [15]b \wedge 
[20] \neg a \vee \neg b)\\
&=&Pr([0]a \wedge [15]b \wedge [20] \neg a) +Pr([0]a \wedge [15]b 
\wedge [20]  \neg b) \\
&&- Pr([0]a \wedge [15]b \wedge [20] \neg a \wedge [20] \neg b) \\
&\approx & 20 \varepsilon_a .p_a .p_b +5 \varepsilon_b .p_b .p_a - 
(20 \varepsilon_a .p_a .p_b)(5 \varepsilon_b .p_b .p_a)\\
&&\mbox{(the last term being negligible)}\\
&\approx & Pr(\langle a,0,20 \rangle \wedge \Sigma_4)+
Pr(\langle b,15,20 \rangle  \wedge \Sigma_4)
\end{eqnarray*}
$Pr(\langle a,0,20 \rangle \;|\;\Sigma_4) \approx 
\frac{20 \varepsilon_a .p_a .p_b}
{\varepsilon_a .p_a .p_b +5 \varepsilon_b .p_b .p_a}
=\frac{1}
{1+\frac{\varepsilon_b}{4.\varepsilon_a}}$\\ 
and $Pr(\langle b,15,20 \rangle\;|\;\Sigma_4) 
\approx \frac{1}{1+\frac{4.\varepsilon_a}{ \varepsilon_b}}$.

Remarkably, these probabilities do not depend on the prior probabilities $p_a$ and 
$p_b$, which is natural (since $a$ and $b$ are known to hold at 0 and 15, respectively); 
this will always be the case for pure prediction problems where disjunctive 
observations appear only at the last time point.
Note that, if $\varepsilon_a=\varepsilon_b$, we get $Pr(\langle a,0,20 \rangle=\frac{4}{5}$ and 
$Pr(\langle b,0,20 \rangle)=\frac{1}{5}$.\footnote{More generally, for $\Sigma =\{[t_a]a,[t_b]b, [t']\neg a \vee \neg b\}$ and 
$\varepsilon_a=\varepsilon_b$, we get  
$Pr(\langle a,t_a,t' \rangle=\frac{t'-t_a}{(t'-t_a) + (t'-t_b)}$ 
and $Pr(\langle b,t_b,t'\rangle=\frac{t'-t_b}{(t'-t_a) + (t'-t_b)}$.}
\end{example}

\begin{example}(pure postdiction)
\begin{center}
\begin{math}
 \Sigma_5: \left\{ \begin{array}{l}
\mbox{[0] } a \vee b\\
\mbox{[5] } \neg a\\
\mbox{[20] } \neg b\\
\end{array}
\right.
\end{math}
\end{center} 

\begin{eqnarray*}
Pr(\langle a,0,5 \rangle \wedge \Sigma_5) &= &
Pr([0]a \wedge [5]\neg a \wedge [20]\neg b) \approx 
5\varepsilon_a.p_a.(1-p_b).\\ 
\mbox{Similarly, } 
Pr(\langle b,0,20 \rangle \wedge \Sigma_5) & \approx& 20 
\varepsilon_b.p_b.(1-p_a).\\
Pr(\Sigma_5) &\approx & Pr(\langle a,0,5 \rangle \wedge \Sigma_5) 
+ 
Pr(\langle b,0,20 \rangle \wedge \Sigma_5).\\ 
Pr(\langle a,0,5 \rangle \;|\;\Sigma_5) & =&
\frac{5\varepsilon_a.p_a.(1-
p_b)}{5\varepsilon_a.p_a.(1-p_b)+20 \varepsilon_b.p_b.(1-p_a)}.
\end{eqnarray*} If we suppose the 
fluents stationary (it is natural to do so considering the usual behaviour of cats) then 
using proposition 
\ref{stationnaire} we obtain the same kind of results as in the previous case : 
$Pr(\langle a,0,5 \rangle\;|\;\Sigma_5) =1/5$ and $Pr(\langle b,0,20 \rangle \;|\;\Sigma_5)=4/5$.
\end{example}

\begin{example}
Take $\Sigma_2$ of example \ref{ex2}, we have:\\
$Pr(\langle a,5,15 \rangle) \;|\;
\Sigma_2)=\frac{1}{1+\frac{\varepsilon_b}{2 \varepsilon_a}+
\frac{15}{2}\varepsilon_c.\frac{p_c}{1-p_c}}$\\
$Pr(\langle b,10,15 \rangle \;|\; \Sigma_2)=\frac{1}{1+2\frac{\varepsilon_a}{ \varepsilon_b}+
15\frac{\varepsilon_a.\varepsilon_c}{\varepsilon_b}.\frac{p_c}{1-p_c}}$\\
and $Pr(\langle a,0,5 \rangle \wedge \langle c,5,20 \rangle\;|\; \Sigma_2)=\frac{1}
{1+\frac{2 (1-pc)}{15 (\varepsilon_c)}+
\frac{\varepsilon_b}{15\varepsilon_a.\varepsilon_c}.\frac{1-p_c}{p_c}}$

Note that if $\varepsilon_a=\varepsilon_b=\varepsilon_c$ and are very small,
$Pr(\langle a,5,15 \rangle \;|\;
\Sigma_2)\approx \frac{2}{3}$, $Pr(\langle b,10,15 \rangle \;|\; \Sigma_2)
\approx \frac{1}{3}$ and $Pr(\langle a,0,5 \rangle 
\wedge \langle c,5,20 \rangle\;|\; \Sigma_2)\approx 0$. 
These results are independent of the prior
probabilities $p_a$, $p_b$, $p_c$.
\end{example}

\section{Discussion}

The original aspects of our work are that (i) minimal changes (explanations)
are provided together with the interval when change may have occurred, which
makes the representation concise, and (ii) knowing probabilities of change
of fluents from one point to the subsequent one, and fluent prior probabilities,
we compute the probability of each explanation and thus we rank them
accordingly.
We showed that minimizing change is coherent with a probabilistic handling of 
change only if probabilities of change are very small (with this assumption,
only minimal explanations may have significant probabilities). This probabilistic
handling of change needs explicit and metric time, and is coherent with the
intuitive idea that the longer the time interval during which a change may have
occurred, the more likely it has actually occurred within this interval.
Many related works share some features with ours. The first approach dealing with
unpredicted change (``fluents that may change by themselves'') is Lifschitz and Rabinov's
\cite{LiRa89}; using the situation calculus and circumscription, they minimize the set
of unpredicted changes from one state to another, giving, thus, something analogous to our
pointwise explanations (without the explicit temporal dimension). Our notion of
``surprise'' is borrowed from Sandewall's ontology
\cite{Sandewall94} where surprises are
defined as unpredicted changes with low frequency; in an earlier version of his book
\cite{Sandewall92}, he proposed to select among competing surprise sets by associating to
each fluent a penalty (the higher the penalty, the more unlikely the fluent may change
unpredictedly), preferred surprises sets minimizing the sum of the penalties of changing
fluents. Dealing with penalties is very analogous to dealing with probabilities,
especially in the case of infinitely small probabilities \cite{DLS94b}, thus our
approach extends his by taking interval durations into account in defining the penalties. 

A probabilistic handling of unpredicted change using explicit, metric time has been 
proposed for temporal projection by Dean and Kanazawa \cite{DeKa89} and later 
on by Hanks and McDermott \cite{HaMD94} in a more general setting. 
In \cite{DeKa89}, unpredicted
change is modelled with exponential survivor functions, asserting that 
$Pr([t+ \Delta t] f \;|\; [t] f) = \exp(\frac{-\Delta t}{\lambda_f})$. 
They do not assume that probabilities of
change are small, and thus do not minimize change. We recall that the reason why we want
change to be minimized is that it leads to a concise list of explanations, since only
minimal explanations may have significant probabilities. This has to
be related to the fact that, in \cite{DeKa89}, probabilities of change are only
used 
for temporal projection, namely, computing the probability of given fluents at the 
latest time point, but they are not used to deal with explaining scenarios 
and  the reasoning is only
forward (it has for instance no postdiction capabilities). Lastly,
\cite{DeKa89} framework
assumes that observations are atomic (no disjunctive observations are allowed while
our approach allows the observation of any propositional formula). The same remarks
apply to \cite{HaMD94}. 

Other related approaches include work in the domains of probabilistic abduction and
model-based diagnosis, temporal diagnosis, and also belief update. Probabilistic (and
cost-based) abduction (\cite{Poole91}, \cite{EiGo93}, \cite{deKlWi87}) 
attach probabilities to explanations from prior probabilities of hypotheses (faults
in model-based diagnosis) and generally an independence assumption amongst them. 
In the domain of temporal model-based diagnosis, Friedrich and Lackinger
\cite{FrLa91} attach time intervals to fault configurations, meaning that
a given component is in a failure mode during the whole interval -- contrarily to our
surprises which are instantaneous (recall that the interval in a surprise is understood
disjunctively, not conjunctively); probabilities are attached to temporal
diagnoses, the evolution of the system is modelled by Markov chains (with transition 
probabilities for the different modes of a component). 
Console et al. \cite{Console92} attach time points to fault configurations; 
their approach is extended in a probabilistic setting by Portinale \cite{Portinale92} 
who also models the evolution of the system by a Markov process.
Cordier and Thi\'ebaux's event-based diagnosis \cite{CoTh94} consider
sequences of events and compute probabilities over diagnoses from priori probabilities
of events (without metric time). Boutilier's generalised update operators
\cite{Boutilier95} handles unpredicted change, explaining observations by events, ranked
according to their plausibility, but without explicit time (he only considers two time
points). If we specialize our framework to a time scale with only two time points
$t_{before}$ and $t_{now}$, then it is possible to show that we obtain something very
close to a generalized update operator, where the allowed events are the elementary
surprises. 

Further work will include the handling of dependent fluents (speciality $D$ in 
Sandewall's ontology) -- in the probabilistic case they may for instance be 
structured in a Bayesian network -- and the integration of surprise minimisation 
together with a handling of actions with alternative effects (speciality $A$ in 
Sandewall's ontology). 

\section*{Acknowledgement}
We would like to thank Didier Dubois and Henri Prade for their comments which helped 
improve this paper.
\baselineskip 0.4cm
\bibliography{flo,Rauc}
\end{document}